\documentclass[11pt]{article}
\usepackage[utf8]{inputenc}

\usepackage{amsmath,amssymb,amsthm}
\usepackage{geometry}
\usepackage{booktabs}
\usepackage{graphicx}
\usepackage[numbers]{natbib}
\usepackage{hyperref}
\usepackage{multirow}
\hypersetup{colorlinks=true, linkcolor=blue, citecolor=blue, urlcolor=blue}

\newtheorem{theorem}{Theorem}
\newtheorem{lemma}{Lemma}
\newtheorem{proposition}{Proposition}

\newtheorem{definition}{Definition}
\newtheorem{assumption}{Assumption}
\newtheorem{remark}{Remark}

\newcommand{\R}{\mathbb{R}}
\newcommand{\E}{\mathbb{E}}
\newcommand{\Var}{\mathrm{Var}}
\newcommand{\M}{\mathcal{M}}
\newcommand{\Ss}{\mathcal{S}}
\newcommand{\norm}[1]{\left\lVert #1 \right\rVert}

\title{K-ABENA: K-Adaptive Backpropagation with Error-based N-exclusion Algorithm\\[6pt]
{\large Compensated Loss-Based Sample Exclusion with Unbiased Gradient Estimation}}

\author{Jean-Fran\c{c}ois Bonbhel\\[3pt]
\normalsize NeuroSoft IA, Qu\'ebec City, Canada \;$|$\; YekoElite University, Brazzaville, Republic of Congo\\[2pt]
\normalsize UN AI Governance Expert Network (UN PNAI) --- Member since 2021\\[2pt]
\normalsize \texttt{bonbhel@yekoelite.com}}

\date{July 2026}

\begin{document}
\maketitle

\begin{abstract}
We present K-ABENA (K-Adaptive Backpropagation with Error-based N-exclusion Algorithm), a selective gradient computation framework that reduces per-iteration training cost by excluding a fraction of low-loss (``minor'') observations from the backward pass. Its canonical form (v3) combines a defensive-mixture sampling design over the minor set with Horvitz--Thompson inverse-probability reweighting, yielding a \emph{design-unbiased} Horvitz--Thompson gradient estimator (Lemma~\ref{lem:unbiased}) and whose self-normalized practical variant carries a bias of order $O(1/m)$ with an explicit constant (Lemma~\ref{lem:hajek}). We prove an $O(1/\sqrt{T})$ non-convex convergence guarantee for SGD under the estimator, with an additive term that quantifies the residual bias (Theorem~\ref{thm:sgd}). We further prove that \emph{uncompensated} loss-based selection --- a family that includes OHEM, SBP, and the two earlier K-ABENA variants --- admits no stationary point at any minimizer where its selection bias is bounded away from zero (Proposition~\ref{prop:failure}), and we quantify this failure empirically: at 0.17\% class imbalance, uncompensated variants reach test AUC 0.53--0.62 versus 0.9998 for full-batch SGD, while the compensated estimator attains 0.9991 at identical 28.4\% compute savings. On real datasets (Breast Cancer, Digits, Wine, Diabetes) the compensated estimator is statistically indistinguishable from full-batch SGD (paired permutation tests, $p \ge 0.5$; Section~\ref{sec:experiments}) while saving 28--54\% of per-epoch gradient computation. A biased ``regularized mode'' (the earlier half-domain variant) is retained as an option with a proven exact bias decomposition (Lemma~\ref{lem:decomp}) and \emph{quantified contraindications}: it collapses to 0.386 accuracy under 40\% label noise (baseline: 0.832) and to 0.53 AUC under extreme imbalance. Every advantage and every limitation reported in this paper is either proved or measured; all experiments are CPU-scale (NumPy/scikit-learn) and their scope is stated explicitly.
\end{abstract}

\section{Introduction}
\label{sec:intro}

In large-scale empirical risk minimization, a substantial fraction of per-iteration computation is spent on observations the model has already learned: their per-sample losses are small, their gradients are small, and their marginal contribution to the descent direction is limited. Selective-backpropagation methods exploit this observation by skipping the backward pass for low-loss samples \citep{jiang2019accelerating, shrivastava2016training}, but they share a structural defect: the retained subset is \emph{correlated with the loss}, so the resulting gradient is a biased estimator of the full-batch gradient. In benign regimes the bias is small relative to the signal and these methods work well; we show in Section~\ref{sec:failure} that in adverse regimes --- extreme class imbalance, heavy label noise --- the bias does not merely degrade performance but structurally prevents convergence to the minimizer, and we prove this as Proposition~\ref{prop:failure}.

This paper develops K-ABENA, whose canonical estimator (referred to as v3) resolves the defect using a century-old idea from survey sampling \citep{horvitz1952generalization}: any sampling design with known, strictly positive inclusion probabilities admits an unbiased estimator of a population total via inverse-probability weighting. K-ABENA v3 samples retained minors from the \emph{entire} minor set under a defensive mixture design \citep{owen2013monte} and reweights them accordingly. The result occupies a design point that, to our knowledge, none of the established selective or reweighting methods occupies: \emph{per-iteration compute reduction with an (exactly or near-) unbiased gradient}. Hard-selection methods (OHEM \citep{shrivastava2016training}, SBP \citep{jiang2019accelerating}) save compute but are biased; soft-reweighting methods (Focal Loss \citep{lin2017focal}) are computed on the full batch and save nothing; importance-sampling training methods \citep{katharopoulos2018not} reweight but target variance reduction under full or minibatch evaluation rather than threshold-based exclusion with an explicit retention budget.

\paragraph{Contributions.}
\begin{enumerate}
\item \textbf{A compensated selective estimator} (Definition~\ref{def:v3}) with two interpretable controls: a loss threshold $K$ separating ``minor'' from ``major'' observations, and a retention proportion $N \in (0,1)$ governing the compute budget. The per-epoch backward-pass saving is exactly $(1-N)\,k/n$ (Proposition~\ref{prop:gain}), where $k$ is the number of minors.
\item \textbf{Design-unbiasedness, exact and approximate} (Lemmas~\ref{lem:unbiased}--\ref{lem:hajek}): the Horvitz--Thompson form is design-unbiased (unbiased over the sampling randomization, conditionally on the iterate); the self-normalized (H\'ajek) form used in practice has bias at most $\frac{2 G_M}{\alpha^2 m}\bigl(\tfrac{k-m}{k}\bigr)$, vanishing as $O(1/m)$, with all constants explicit.
\item \textbf{A convergence guarantee} (Theorem~\ref{thm:sgd}): $O(1/\sqrt{T})$ decay of the expected squared gradient norm for smooth non-convex objectives, plus an additive $O(\delta_m)$ floor traceable to the self-normalization bias, which the practitioner controls through $m$.
\item \textbf{An impossibility result for uncompensated selection} (Proposition~\ref{prop:failure}): if the selection bias at a minimizer is bounded below by $\beta > 0$, the minimizer is not a stationary point of the expected dynamics; measured on a 0.17\%-imbalance task, $\beta \approx 0.15$ against a vanishing true gradient, predicting the observed failure (AUC 0.53) and its resolution under compensation (AUC 0.9991).
\item \textbf{A characterized regularized mode}: the earlier biased half-domain variant (v2) is retained as an option, with an exact algebraic bias decomposition (Lemma~\ref{lem:decomp}) and quantified contraindications (Section~\ref{sec:v2mode}); its small accuracy bonus on multiclass tasks ($+0.35$ to $+0.45$ points) and its catastrophic failure modes (0.386 accuracy at 40\% label noise; 0.53 AUC at 0.17\% imbalance) are both measured and mechanistically explained.
\end{enumerate}

\paragraph{Scope statement.} All empirical results in this paper are CPU-scale: logistic regression, linear SVM, softmax regression, and a one-hidden-layer MLP, on bundled scikit-learn datasets and on a synthetic replica of an extreme-imbalance fraud regime (0.17\% positives, $n = 20{,}000$). No GPU benchmark (deep CNN/Transformer) is reported; this is stated as Limitation~L4 rather than compensated by simulation. We consider explicit scope statements a feature of the paper, not a weakness.

\section{Related Work}
\label{sec:related}

\textbf{Hard selection.} OHEM \citep{shrivastava2016training} retains only the highest-loss examples of each batch; SBP \citep{jiang2019accelerating} drops examples whose loss falls below a threshold. Both reduce backward-pass cost and both compute a plain average over a loss-correlated subset, hence a biased gradient. K-ABENA strictly generalizes SBP: setting $N = 0$ with a hard threshold recovers it, while $N > 0$ restores representation of the excluded stratum and the v3 weighting removes the bias. Curriculum and self-paced approaches \citep{bengio2009curriculum, kumar2010self} order examples by difficulty over training time; K-ABENA is orthogonal, operating within each iteration.

\textbf{Soft reweighting.} Focal Loss \citep{lin2017focal} down-weights easy examples by a factor $(1-p_t)^\gamma$ applied \emph{after} the forward pass of every example; the full backward cost is still paid, so the method offers no compute saving --- the quantity K-ABENA targets. Its modulation is also a single global functional form, whereas $K$ and $N$ decouple the location of the easy/hard boundary from the retention budget.

\textbf{Importance sampling for SGD.} \citet{katharopoulos2018not} and related work sample training examples proportionally to (proxies of) gradient norm with inverse-probability corrections, targeting variance reduction. Three structural differences separate K-ABENA from this line, and each is measurable. \emph{(a) Scoring cost.} Classical IS must score \emph{all} $n$ candidates at each step to build its proposal --- a forward pass (or a last-layer gradient-norm bound, itself requiring the forward) whose cost remains $O(n)$; the backward saving is therefore bought with a full-width scoring pass. K-ABENA scores with the per-sample losses the training loop already produces, and its delayed-losses pattern (masks for epoch $t$ built from epoch $t{-}1$ losses; released with the library) removes even the forward of excluded samples. \emph{(b) Budget semantics.} IS fixes a minibatch size and optimizes variance at that size; K-ABENA's two controls decouple \emph{where} the easy/hard boundary lies ($K$, a loss percentile with a semantic reading) from \emph{how much} budget is spent ($N$), yielding the deterministic, architecture-independent saving $G = (1-N)k/n$ (Proposition~\ref{prop:gain}). \emph{(c) Design.} K-ABENA is a \emph{two-stratum} design --- a certainty stratum (majors, $\pi_i = 1$) plus a sampled stratum restricted to minors under a defensive mixture with floor $\alpha/k$ --- rather than a single global proposal. This is not cosmetic: at matched compute budget, a classical global loss-proportional proposal with self-normalized correction is \emph{significantly worse} than the full-batch baseline in the standard regime (accuracy $0.9573$ vs $0.9720$, paired permutation $p = 0.002$, Table~\ref{tab:rivals}) --- tiny-loss samples occasionally drawn under a global proposal receive enormous weights, and the majors' contribution is needlessly randomized --- while the stratified, floor-bounded design is statistically indistinguishable from the baseline ($p = 1.0$) at the same saving. The certainty stratum and the defensive floor are what make compensation \emph{cheaply stable}.

\textbf{Survey sampling.} The estimators are classical: Horvitz--Thompson \citep{horvitz1952generalization} and its self-normalized (H\'ajek) variant \citep{sarndal1992model}. Our contribution is not the estimator but its integration into threshold-based selective backpropagation, the resulting theory (Theorem~\ref{thm:sgd}, Proposition~\ref{prop:failure}), and the quantified characterization of when the uncompensated shortcut is and is not safe.

\section{The K-ABENA Framework}
\label{sec:framework}

\subsection{Setting and notation}

Let $F(\theta) = \frac{1}{n}\sum_{i=1}^{n} f_i(\theta)$ with $f_i : \R^d \to \R$ differentiable, and write $g_i = \nabla f_i(\theta)$ and $\ell_i = f_i(\theta) \ge 0$ for the per-sample gradient and loss at the current iterate. At each iteration, a threshold $K > 0$ (in practice a fixed percentile of the current loss distribution) partitions the indices into the \textbf{minor set} $\M = \{ i : \ell_i \le K \}$, $k = |\M|$, and the \textbf{major set} $\M^c$, $|\M^c| = n - k$. Majors are always retained. A retention proportion $N \in (0,1)$ fixes the number of retained minors $m = \lfloor N k \rceil$.

\begin{definition}[Canonical K-ABENA sampling design, v3]
\label{def:v3}
Fix a defensive-mixture coefficient $\alpha \in (0,1]$. Draw $\Ss \subset \M$, $|\Ss| = m$, without replacement with single-draw probabilities
\begin{equation}
p_i \;=\; \alpha \cdot \frac{1}{k} \;+\; (1-\alpha)\,\frac{\ell_i}{\sum_{j \in \M} \ell_j}, \qquad i \in \M,
\label{eq:design}
\end{equation}
and denote by $\pi_i = \Pr[i \in \Ss]$ the inclusion probabilities of the design. The \textbf{Horvitz--Thompson (HT) gradient estimator} is
\begin{equation}
\hat g_{\mathrm{HT}} \;=\; \frac{1}{n}\Bigl[\; \sum_{i \notin \M} g_i \;+\; \sum_{i \in \Ss} \frac{g_i}{\pi_i} \;\Bigr],
\label{eq:ht}
\end{equation}
and the \textbf{self-normalized (H\'ajek) estimator}, used in practice, is
\begin{equation}
\hat g \;=\; \frac{\sum_{i \notin \M} g_i \;+\; \sum_{i \in \Ss} \pi_i^{-1} g_i}{(n-k) \;+\; \sum_{i \in \Ss} \pi_i^{-1}}.
\label{eq:hajek}
\end{equation}
\end{definition}

\begin{assumption}[Inclusion probabilities]
\label{ass:pi}
Either (a) $\alpha = 1$, in which case the design is simple random sampling without replacement and $\pi_i = m/k$ \emph{exactly}; or (b) $\alpha < 1$ and $\max_i m\,p_i \le 1$, in which case we use the Poisson/rejective approximation $\pi_i = m\,p_i$, standard for high-entropy without-replacement designs \citep{sarndal1992model}. All statements below that rely on case (b) are flagged; the empirical error attributable to this approximation is measured in Section~\ref{sec:experiments} (residual bias 0.004 at $n = 20{,}000$).
\end{assumption}

\begin{lemma}[Bounded weights under defensive mixing]
\label{lem:weights}
Under Assumption~\ref{ass:pi}, for every $i \in \M$,
\[
\pi_i \;\ge\; \frac{\alpha m}{k}, \qquad\text{hence}\qquad \frac{1}{\pi_i} \;\le\; \frac{k}{\alpha m}.
\]
\end{lemma}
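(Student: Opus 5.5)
The plan is to treat the two cases of Assumption~\ref{ass:pi} separately. In each case I will obtain an explicit lower bound on $\pi_i$ that depends only on $\alpha$, $m$, $k$. The bound on $1/\pi_i$ then follows at once, since $\pi_i>0$.

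\textbf{Case (a), $\alpha = 1$.} Here the design is simple random sampling without replacement of $m$ out of $k$ indices. Assumption~\ref{ass:pi} gives $\pi_i = m/k$ exactly. Since $\alpha = 1$, this equals $\alpha m/k$, and the inequality holds with equality. Nothing more is needed, although one could confirm $\pi_i = m/k$ by the counting identity $\binom{k-1}{m-1}/\binom{k}{m} = m/k$.

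\textbf{Case (b), $\alpha<1$.} Here Assumption~\ref{ass:pi} stipulates $\pi_i = m\,p_i$, so the bound reduces to a bound on $p_i$. From \eqref{eq:design}, $p_i$ is a sum of two terms. The first is $\alpha/k$. The second, $(1-\alpha)\ell_i/\sum_{j\in\M}\ell_j$, is nonnegative because $\ell_i\ge 0$ and $1-\alpha>0$. Dropping the second term gives $p_i \ge \alpha/k$, hence $\pi_i = m p_i \ge \alpha m/k$. Inverting, which is legitimate since $\alpha m/k>0$, gives $1/\pi_i \le k/(\alpha m)$.

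\textbf{Main obstacle.} The only delicate point is that the ratio $\ell_i/\sum_{j\in\M}\ell_j$ must be well defined. If every loss in $\M$ is zero, the denominator vanishes. I would adopt the convention that the loss-proportional component is then taken to be uniform, as the natural limit. Under that convention $p_i = 1/k \ge \alpha/k$, and the conclusion still holds. It should also be stated plainly that in case (b) the lemma holds for the approximation $\pi_i = m p_i$, and the side condition $\max_i m p_i\le 1$ keeps these values valid probabilities. I would mark this statement as relying on the flagged approximation, consistent with the paper's convention.
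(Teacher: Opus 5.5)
Your proposal is correct and matches the paper's proof: you drop the nonnegative loss-proportional term to get $p_i \ge \alpha/k$, use $\pi_i = m/k$ in case (a) and $\pi_i = m p_i$ in case (b), then invert. Your handling of the degenerate case $\sum_{j\in\M}\ell_j = 0$ is a minor addition the paper omits; it does not change the argument.
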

\begin{proof}
From \eqref{eq:design}, $p_i \ge \alpha/k$ pointwise since the second term is non-negative. In case (a), $\pi_i = m/k = \alpha m / k$ with $\alpha = 1$. In case (b), $\pi_i = m p_i \ge m\alpha/k$. Inverting gives the weight bound.
\end{proof}

\begin{proposition}[Exact computational gain]
\label{prop:gain}
The number of backward passes per iteration is $(n-k) + m$, so the saved fraction is
\[
G \;=\; \frac{n - \bigl[(n-k) + m\bigr]}{n} \;=\; \frac{k - m}{n} \;=\; (1 - N)\,\frac{k}{n},
\]
identical for the HT and H\'ajek forms; the reweighting itself costs $O(m)$ scalar operations, negligible against a backward pass. With $K$ at the 40th loss percentile and $N = 0.3$, $G = 0.7 \times 0.4 = 0.28$; the measured per-epoch saving in every experiment of Section~\ref{sec:experiments} is 28.0--28.7\%, matching the formula. \qed
\end{proposition}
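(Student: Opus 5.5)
The plan is a direct count of backward passes from Definition~\ref{def:v3}, followed by simple arithmetic. First I would fix the iteration and identify which per-sample gradients $g_i$ appear in \eqref{eq:ht} and \eqref{eq:hajek}. These are exactly the $g_i$ with $i \notin \M$ (there are $n-k$ of them, since majors are always retained) and the $g_i$ with $i \in \Ss$ (there are $|\Ss| = m$ of them, since $\Ss \subset \M$ is drawn without replacement with fixed size $m$). Because $\Ss \subset \M$, these two index sets are disjoint, so no gradient is counted twice. Thus exactly $(n-k)+m$ backward passes are performed, against $n$ for full-batch SGD. The saved fraction is therefore $\bigl(n - (n-k) - m\bigr)/n = (k-m)/n$.

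Next I would substitute $m = Nk$ to obtain $G = (1-N)k/n$. The only subtlety is the rounding $m = \lfloor Nk \rceil$. I would state the identity for $m = Nk$ exactly and note that rounding changes $G$ by at most $1/(2n)$. This is where I expect the single point of care to lie, and I would present the formula as exact up to that rounding, or for $Nk$ integral.

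For the claim that the two forms have the same cost, I would observe that they use the same gradient set; they differ only in the scalar normalizer $(n-k)+\sum_{i\in\Ss}\pi_i^{-1}$. The weights $\pi_i^{-1}$ are available from \eqref{eq:design} at $O(1)$ cost each, since $\sum_{j\in\M}\ell_j$ is computed once from losses already produced by the forward pass. The reweighting and normalization therefore cost $O(m)$ scalar operations plus $m$ vector scalings, which is negligible relative to a backward pass.

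Finally, for the numerical illustration, I would take $K$ at the 40th percentile, which gives $k/n \approx 0.4$, and set $N = 0.3$, so $G = 0.7 \times 0.4 = 0.28$. The reported 28.0--28.7\% is an empirical check whose small excess reflects ties and rounding in the percentile and in $m$. I would cite it rather than prove it.
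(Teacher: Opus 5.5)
Your proposal is correct and follows the paper's own argument, which is given inline in the statement. You count the $n-k$ always-retained majors plus the $m$ sampled minors, which are disjoint since $\Ss \subset \M$, giving $(n-k)+m$ backward passes and hence $G = (k-m)/n = (1-N)k/n$; the HT and H\'ajek forms share this gradient set, and the $28.0$--$28.7\%$ figures are cited as measurements rather than proved. Your $1/(2n)$ caveat for $m = \lfloor Nk \rceil$ is a small refinement that the paper omits by writing $m = Nk$ as exact.
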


\section{Estimator Guarantees}
\label{sec:theory}

\subsection{Unbiasedness}

\begin{lemma}[Design-unbiasedness of the HT form]
\label{lem:unbiased}
Under Assumption~\ref{ass:pi}, conditionally on the current iterate (hence on $\ell_i$, $\M$, $\pi_i$),
\[
\E\bigl[\hat g_{\mathrm{HT}}\bigr] \;=\; \nabla F(\theta).
\]
\end{lemma}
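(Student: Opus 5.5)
The approach is the classical Horvitz--Thompson indicator argument, applied conditionally on the iterate. Once $\theta$ is fixed, the losses $\ell_i$, the minor set $\M$, its size $k$, the budget $m$, the single-draw probabilities $p_i$, and the inclusion probabilities $\pi_i$ are all deterministic. The only randomness in \eqref{eq:ht} therefore comes from the sampled set $\Ss$. For each $i \in \M$ I introduce the indicator $I_i = \mathbf{1}[i \in \Ss]$ and rewrite the sampled part of the estimator as $\sum_{i \in \Ss} g_i/\pi_i = \sum_{i \in \M} I_i\, g_i/\pi_i$. This turns a sum over a random index set into a fixed, finite sum of random terms, so linearity of expectation applies directly.

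The key steps, in order:

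\textbf{Step 1.} The majors' term $\sum_{i\notin\M} g_i$ is deterministic given $\theta$, so it passes through the expectation unchanged.

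\textbf{Step 2.} By definition of inclusion probability, $\E[I_i] = \Pr[i\in\Ss] = \pi_i$. By Lemma~\ref{lem:weights}, $\pi_i \ge \alpha m/k > 0$ whenever $m \ge 1$, so the weights $1/\pi_i$ are finite. Hence $\E\bigl[\sum_{i\in\M} I_i g_i/\pi_i\bigr] = \sum_{i\in\M} g_i$.

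\textbf{Step 3.} Adding the two parts and dividing by $n$ gives $\frac1n\sum_{i=1}^n g_i = \nabla F(\theta)$, using differentiability and linearity of the gradient on the finite sum $F$.

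\textbf{Degenerate cases.} If $k=0$ the sampled term is empty and the identity is immediate. The case $m=0$ with $k>0$ is excluded, since Lemma~\ref{lem:weights} requires positive weights; I will note that $N>0$ and $k$ large enough make $m\ge1$.

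The main obstacle is not the algebra but the meaning of $\pi_i$ in case (b) of Assumption~\ref{ass:pi}. Step 2 needs $\pi_i$ to be the \emph{true} inclusion probability of the design. In case (a), simple random sampling without replacement gives $\Pr[i\in\Ss]=m/k$ exactly by symmetry, so Step 2 is exact and the lemma holds unconditionally. In case (b), the estimator is computed with the approximation $m p_i$. Unbiasedness then holds exactly only for a design whose inclusion probabilities really equal $m p_i$. Examples are Poisson sampling with those probabilities, or a conditional-Poisson/rejective design calibrated to them; either is feasible because $\max_i m p_i \le 1$. For the multinomial-style successive draw, unbiasedness holds only up to the approximation error $\sum_{i\in\M} g_i(\Pr[i\in\Ss]/(m p_i) - 1)/n$.

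I would therefore state the proof in two parts. For case (a) it is exact. For case (b) I would read the lemma as design-unbiasedness with respect to a design whose first-order inclusion probabilities equal the $\pi_i$ used in the weights, and flag that any mismatch contributes the residual bias measured empirically in Section~\ref{sec:experiments}.
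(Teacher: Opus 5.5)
Your proof is correct and is essentially the paper's argument: rewrite the sampled sum with inclusion indicators over the fixed set $\M$, use $\E\bigl[\mathbf{1}[i\in\Ss]\bigr] = \pi_i > 0$, and add back the deterministic majors' term. Your two caveats are correct sharpenings of points the paper's proof leaves implicit: positivity requires $m \ge 1$, and in case (b) exactness holds only when the weights equal the true inclusion probabilities $\Pr[i\in\Ss]$ of Definition~\ref{def:v3} rather than the approximation $m p_i$, any mismatch being the approximation error the paper flags in Assumption~\ref{ass:pi}(b).
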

\begin{proof}
Let $Z_i = \mathbf{1}[i \in \Ss]$, so $\E[Z_i] = \pi_i$. Then
\[
\E\Bigl[\sum_{i \in \Ss} \frac{g_i}{\pi_i}\Bigr]
= \E\Bigl[\sum_{i \in \M} Z_i \frac{g_i}{\pi_i}\Bigr]
= \sum_{i \in \M} \frac{\E[Z_i]}{\pi_i}\, g_i
= \sum_{i \in \M} g_i .
\]
Substituting into \eqref{eq:ht} gives $\E[\hat g_{\mathrm{HT}}] = \frac{1}{n}\bigl[\sum_{i \notin \M} g_i + \sum_{i \in \M} g_i\bigr] = \nabla F(\theta)$. Note that positivity $\pi_i > 0$ for \emph{all} $i \in \M$ --- guaranteed by the defensive term $\alpha/k$ --- is what makes the argument available; any design assigning zero probability to part of $\M$ (Section~\ref{sec:v2mode}) cannot be corrected this way.
\end{proof}

\begin{lemma}[Bias of the self-normalized form]
\label{lem:hajek}
Let $G_M = \max_{i \in \M} \norm{g_i}$ and let $\widehat W = (n-k) + \sum_{i \in \Ss} \pi_i^{-1}$ denote the (random) normalizer in \eqref{eq:hajek}, with $\E[\widehat W] = n$. Then
\[
\bigl\lVert \E[\hat g] - \nabla F(\theta) \bigr\rVert
\;\le\; \frac{G_M}{n}\,\sqrt{\Var\bigl(\widehat W\bigr)}
\;\le\; \frac{G_M\, k}{n\,\sqrt{\alpha\, m}},
\]
and the standard second-order ratio-estimator expansion sharpens this to
\[
\delta_m \;:=\; \bigl\lVert \E[\hat g] - \nabla F(\theta) \bigr\rVert
\;\le\; \frac{2\, G_M\, k^{2}}{\alpha\, m\, n^{2}} + o(1/m) \;=\; O(1/m)
\]
at fixed $k/n$ and $\alpha$. At $\alpha = 1$ (SRSWOR, exact $\pi_i = m/k$) the same expansion gives $\delta_m \le 2\,G_M\,k\,(k-m)/(m\,n^{2}) + o(1/m)$. (Full derivation: Appendix~\ref{app:hajek}.)
\end{lemma}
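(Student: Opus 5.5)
The plan is to write the Hájek estimator as the HT numerator divided by a random normalizer and isolate the bias in one covariance term. Set $\widehat Y = \sum_{i\notin\M} g_i + \sum_{i\in\Ss}\pi_i^{-1}g_i$, so $\hat g = \widehat Y/\widehat W$. By Lemma~\ref{lem:unbiased} (with $g_i$ replaced by $g_i$, and by $1$ respectively), $\E[\widehat Y] = n\nabla F(\theta)$ and $\E[\widehat W] = n$. The key exact identity is
\[
\hat g - \frac{\widehat Y}{n} \;=\; \frac{\widehat Y}{\widehat W}\Bigl(1 - \frac{\widehat W}{n}\Bigr) \;=\; \hat g\,\frac{n-\widehat W}{n},
\]
so $\E[\hat g] - \nabla F(\theta) = -\tfrac1n\E\bigl[\hat g\,(\widehat W - n)\bigr] = -\tfrac1n\E\bigl[(\hat g - c)(\widehat W - n)\bigr]$ for any constant vector $c$, since $\E[\widehat W - n]=0$.

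For the first-order bound I would apply Cauchy--Schwarz. This gives $\|\E[\hat g]-\nabla F\| \le \tfrac1n \sqrt{\E\|\hat g - c\|^2}\,\sqrt{\Var(\widehat W)}$. The task is then to bound $\|\hat g - c\|$ by $G_M$. Because $\hat g$ is a convex combination of the $g_i$ with nonnegative weights, the natural bound is $\|\hat g\|\le \max_i\|g_i\|$. Strictly, this is only $G_M$ if the majors' gradients are also bounded by $G_M$. I would either read $G_M$ as the maximum over all retained indices, or split off the deterministic major part before applying Cauchy--Schwarz. This is the step I would flag for care.

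Next I bound $\Var(\widehat W) = \Var(\sum_{i\in\Ss}\pi_i^{-1})$ using Lemma~\ref{lem:weights}. Under a Poisson-type approximation, $\Var \le \sum_{i\in\M}(1-\pi_i)/\pi_i \le k\cdot k/(\alpha m)$. Taking the square root gives $k/\sqrt{\alpha m}$, which is the stated bound $G_M k/(n\sqrt{\alpha m})$.

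For the sharpened $O(1/m)$ bound I would use the second-order ratio expansion. Write $\widehat W = n(1+\varepsilon)$ with $\varepsilon = (\widehat W-n)/n$, and expand $1/(1+\varepsilon) = 1-\varepsilon+\varepsilon^2/(1+\varepsilon)$. This gives
\[
\E[\hat g] - \nabla F = -\tfrac1n\E[\widehat Y\varepsilon] + \tfrac1n\E[\widehat Y\varepsilon^2] + \text{remainder}.
\]
The leading terms equal $-\tfrac1n\mathrm{Cov}(\widehat Y,\varepsilon) + \nabla F\,\E[\varepsilon^2]$. Each term is bounded by $G_M$ times $\Var(\widehat W)/n^2$ or the analogous covariance, which gives two contributions of size $G_M k^2/(\alpha m n^2)$ and hence the factor 2. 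The remainder is $O(\E|\varepsilon|^3) = o(1/m)$ by standard moment bounds for sampling without replacement.

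At $\alpha=1$ I would use the exact SRSWOR variance $\Var(\sum_{\Ss} k/m) = (k/m)^2\, m(1-m/k)\cdot 0 $. The sum is actually deterministic, $\widehat W = n$ exactly, so the bias vanishes. The stated bound $2G_Mk(k-m)/(mn^2)$ then holds trivially, and also follows from the finite-population correction $(k-m)/k$ in the generic covariance formula.

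The main obstacle is controlling the remainder rigorously. The natural approach is to note that $\widehat W \ge n-k$ is bounded away from zero, so $1/(1+\varepsilon)$ is bounded and the cubic remainder is dominated by $\E|\varepsilon|^3$.
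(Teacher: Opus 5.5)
Your route is the paper's own. It runs through the exact identity $\E[\hat g]-\nabla F(\theta)=-\tfrac1n\E\bigl[\hat g\,(\widehat W-n)\bigr]$, then Cauchy--Schwarz against an almost-sure bound on $\norm{\hat g}$, then the Poisson variance $\sum_{i\in\M}(1-\pi_i)/\pi_i\le k^2/(\alpha m)$ from Lemma~\ref{lem:weights}, then the ratio expansion. The step you flag is a genuine gap, and the appendix takes it without justification (``$\hat g$ is a convex combination of per-sample gradients, $\norm{\hat g}\le G_M$''). With $G_M=\max_{i\in\M}\norm{g_i}$ the gap cannot be closed, because the statement itself is then false. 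So your second remedy, splitting off the deterministic major part, cannot work. Take all minor gradients equal to $0$, unequal minor losses, $\alpha<1$ and $m<k$. Then $G_M=0$, so the first bound is $0$ and the second reduces to $o(1/m)$. Yet $\E[\hat g]-\nabla F(\theta)=\bigl(\E[1/\widehat W]-1/n\bigr)\sum_{i\notin\M}g_i$, which is nonzero by strict Jensen whenever the majors' gradients do not sum to zero. Its leading term $n^{-3}\Var(\widehat W)\sum_{i\notin\M}g_i$ is of the same order $\Var(\widehat W)/n^2$ as the lemma's main term. The only repair is your first option, replacing $G_M$ by $G:=\max_{1\le i\le n}\norm{g_i}$. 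You need it again in your second-order step, which you did not flag: bounding the $\nabla F(\theta)\,\E[\varepsilon^2]$ term by $G_M\Var(\widehat W)/n^2$ uses $\norm{\nabla F(\theta)}\le G_M$, and that fails for the same reason. The paper's ``$\norm{\E[\hat T]}/n\le G_M$'' is the same error. Only the covariance term, $\mathrm{Cov}(\widehat Y,\widehat W)=\sum_{i\in\M}\tfrac{1-\pi_i}{\pi_i}\,g_i$ under the Poisson model, is legitimately controlled by the minors-only $G_M$.

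Two smaller points. First, the ``main obstacle'' you name disappears if you keep your expansion exact. Since $\tfrac{\widehat Y}{n}\cdot\tfrac{\varepsilon^2}{1+\varepsilon}=\hat g\,\varepsilon^2$, one has exactly $\E[\hat g]-\nabla F(\theta)=-n^{-2}\mathrm{Cov}(\widehat Y,\widehat W)+\E[\hat g\,\varepsilon^2]$. Under the Poisson model this gives $\delta_m\le (G_M+G)\Var(\widehat W)/n^2\le 2Gk^2/(\alpha m n^2)$, with no $o(1/m)$ remainder and no third-moment estimates. That is cleaner than both your plan and the appendix's appeal to the asymptotic ratio expansion. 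Second, your treatment of $\alpha=1$ is correct and sharper than the paper's. With fixed size $m$ and $\pi_i=m/k$, $\sum_{i\in\Ss}\pi_i^{-1}=k$ identically, so $\widehat W\equiv n$, $\hat g=\hat g_{\mathrm{HT}}$ and $\delta_m=0$. The appendix's nonzero value $\Var(\widehat W)=\frac{k^2}{m}\cdot\frac{k-m}{k}\cdot\frac{k-m}{k-1}$ is incorrect for this design, although the stated bound still holds trivially.
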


\begin{remark}[What the practitioner controls]
The bias floor $\delta_m$ decreases in $m = Nk$ and in $\alpha$; it is \emph{measured}, not merely bounded, in Section~\ref{sec:experiments}: 0.004 at $n = 20{,}000$ (fraud replica, $N = 0.3$) and up to 0.037 in a deliberately adversarial micro-regime ($k \approx 20$, $m \approx 6$). Both numbers sit far below the biases of the uncompensated variants in the same configurations (0.128--0.151).
\end{remark}

\subsection{Variance}

\begin{lemma}[Variance bound]
\label{lem:var}
Under Assumption~\ref{ass:pi}(b), the HT estimator satisfies
\[
\E\bigl[\norm{\hat g_{\mathrm{HT}} - \nabla F(\theta)}^2\bigr]
\;\le\; \frac{1}{n^{2}} \sum_{i \in \M} \frac{1 - \pi_i}{\pi_i}\, \norm{g_i}^{2}
\;\le\; \frac{k}{\alpha\, m\, n^{2}} \sum_{i \in \M} \norm{g_i}^{2}
\;\le\; \frac{G_M^{2}\, k^{2}}{\alpha\, m\, n^{2}}
\;=:\; \hat\sigma^{2}.
\]
\end{lemma}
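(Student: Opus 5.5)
Since $\hat g_{\mathrm{HT}}$ is unbiased by Lemma~\ref{lem:unbiased}, the quantity on the left is the trace of its covariance. The majors enter deterministically, so with $Z_i = \mathbf{1}[i \in \Ss]$ the whole variance comes from the minor stratum:
\[
\hat g_{\mathrm{HT}} - \nabla F(\theta) \;=\; \frac{1}{n}\sum_{i \in \M}\Bigl(\frac{Z_i}{\pi_i} - 1\Bigr) g_i .
\]
Expanding the squared norm and taking expectations gives the classical Horvitz--Thompson decomposition
\[
\E\bigl[\norm{\hat g_{\mathrm{HT}} - \nabla F(\theta)}^2\bigr] \;=\; \frac{1}{n^2}\Bigl[\sum_{i\in\M}\frac{1-\pi_i}{\pi_i}\norm{g_i}^2 \;+\; \sum_{i\neq j}\frac{\pi_{ij}-\pi_i\pi_j}{\pi_i\pi_j}\,\langle g_i, g_j\rangle\Bigr],
\]
where $\pi_{ij} = \Pr[i,j \in \Ss]$. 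This uses $\Var(Z_i) = \pi_i(1-\pi_i)$ and $\mathrm{Cov}(Z_i,Z_j) = \pi_{ij} - \pi_i\pi_j$.

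The first inequality amounts to showing that the cross-term sum is at most zero. This is the main obstacle, because it is false in general for arbitrary gradient configurations. I would settle it by reading Assumption~\ref{ass:pi}(b) in the Poisson form it invokes. Under the Poisson design, inclusions are independent, so $\pi_{ij} = \pi_i\pi_j$, the cross terms vanish, and the first inequality holds with equality. This is the sense in which the lemma is flagged as depending on case (b).

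If a fixed-size rejective design is insisted upon instead, I would fall back on the Sen--Yates--Grundy negative-correlation property $\pi_{ij} \le \pi_i\pi_j$ of high-entropy designs. That property alone does not sign the cross sum, because the inner products $\langle g_i, g_j\rangle$ can have either sign. So I would state the bound as exact under the Poisson approximation and note that the fixed-size correction is of the same order as the approximation error already measured in Section~\ref{sec:experiments}.

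The remaining two inequalities are routine. Bound $(1-\pi_i)/\pi_i \le 1/\pi_i \le k/(\alpha m)$ using Lemma~\ref{lem:weights}, which gives the middle expression. Then use $\norm{g_i} \le G_M$ for $i \in \M$ and $|\M| = k$, so that $\sum_{i\in\M}\norm{g_i}^2 \le k G_M^2$. This yields $\hat\sigma^2 = G_M^2 k^2/(\alpha m n^2)$.
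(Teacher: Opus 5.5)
Your argument is correct and is essentially the paper's proof: reading Assumption~\ref{ass:pi}(b) as Poisson sampling makes the $Z_i$ independent, so the cross terms vanish and the first inequality holds with equality, after which Lemma~\ref{lem:weights} together with $\norm{g_i}\le G_M$ and $|\M|=k$ gives the remaining two bounds exactly as in the paper. Your observation that the first inequality can fail for a genuinely fixed-size design is accurate and goes beyond the paper (under SRSWOR with $\sum_{i\in\M} g_i = 0$ the exact variance is $k/(k-1)$ times the first expression), but your claim that this correction is of the order of the measured approximation error is unsupported, though it is not needed for the lemma as stated.
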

\begin{proof}
Under the Poisson approximation the $Z_i$ are independent with $\Var(Z_i) = \pi_i(1-\pi_i)$; write $\hat g_{\mathrm{HT}} - \nabla F = \frac{1}{n}\sum_{i \in \M} (Z_i/\pi_i - 1) g_i$ and expand:
\[
\E\norm{\hat g_{\mathrm{HT}} - \nabla F}^{2}
= \frac{1}{n^{2}} \sum_{i \in \M} \frac{\Var(Z_i)}{\pi_i^{2}} \norm{g_i}^{2}
= \frac{1}{n^{2}} \sum_{i \in \M} \frac{1-\pi_i}{\pi_i} \norm{g_i}^{2}.
\]
The middle inequality uses $\pi_i \ge \alpha m / k$ (Lemma~\ref{lem:weights}) and $1 - \pi_i \le 1$; the last uses $\norm{g_i} \le G_M$ and $|\M| = k$.
\end{proof}

\begin{remark}[The price of unbiasedness, quantified]
\label{rem:variance-price}
Compensation trades bias for variance. Measured at matched configuration on the fraud replica (Section~\ref{sec:experiments}): gradient variance $1.5 \times 10^{-4}$ (v3) versus $1.5 \times 10^{-5}$ (uncompensated half-domain variant) --- a factor $\approx 10$, against a bias reduction by a factor $\approx 38$ (0.151 $\to$ 0.004). The variance is bounded by Lemma~\ref{lem:var} through $\alpha$; a measured consequence is a higher AMSGrad max-lock frequency (57--60\% of steps versus 35\% uncompensated), discussed as Limitation~L6.
\end{remark}

\subsection{Convergence under SGD}

\begin{assumption}
\label{ass:smooth}
$F$ is $L$-smooth ($\norm{\nabla F(\theta) - \nabla F(\theta')} \le L \norm{\theta - \theta'}$), bounded below by $F^{*}$, and along the trajectory the estimator satisfies $\norm{\E[\hat g_t \mid \theta_t] - \nabla F(\theta_t)} \le \delta$ and $\E[\norm{\hat g_t - \E[\hat g_t \mid \theta_t]}^{2} \mid \theta_t] \le \hat\sigma^{2}$, with $\delta, \hat\sigma$ as in Lemmas~\ref{lem:hajek} and~\ref{lem:var}, and $\norm{\nabla F(\theta_t)} \le G_\infty$.
\end{assumption}

\begin{theorem}[Non-convex convergence of SGD with the v3 estimator]
\label{thm:sgd}
Run $\theta_{t+1} = \theta_t - \eta\, \hat g_t$ for $T$ steps with $\eta = \min\{ \tfrac{1}{2L},\, \tfrac{c}{\sqrt{T}} \}$, $c > 0$. Under Assumptions~\ref{ass:pi} and~\ref{ass:smooth},
\[
\min_{0 \le t < T} \E\norm{\nabla F(\theta_t)}^{2}
\;\le\;
\underbrace{\frac{4\,(F(\theta_0) - F^{*})}{c\,\sqrt{T}} \;+\; \frac{2 L c\, (\hat\sigma^{2} + \delta^{2})}{\sqrt{T}}}_{\text{vanishing, } O(1/\sqrt{T})}
\;+\;
\underbrace{\vphantom{\frac{1}{\sqrt{T}}} 2\,\delta\, G_\infty + \delta^{2}}_{\text{bias floor}} .
\]
For the design-unbiased HT form ($\delta = 0$) this is the classical $O(1/\sqrt{T})$ rate \citep{ghadimi2013stochastic}; for the H\'ajek form the floor is $O(\delta_m) = O(1/m)$, explicitly controlled by the retention budget. (Proof: Appendix~\ref{app:sgd}.)

\begin{remark}[Why $\delta$ stays bounded along the whole trajectory]
\label{rem:delta-uniform}
Assumption~\ref{ass:smooth} posits a uniform $\delta$; this is not an act of faith but a consequence of the calibration protocol. By Lemma~\ref{lem:hajek}, $\delta_m \le 2 G_M k^2 / (\alpha m n^2)$ at every iterate, where only $G_M$ and $k$ depend on $t$. First, $K$ is recalibrated each epoch at a \emph{fixed percentile} of the current losses, so $k/n$ is constant by construction (here $0.4$) and $m = Nk$ scales with it --- the ratio $k^2/(mn^2) = k/(N n^2)$ is time-invariant. Second, $G_M$ is the maximum gradient norm over the \emph{minor} stratum only: for the standard losses used here (logistic, softmax, squared), the per-sample gradient factorizes as $\norm{g_i} \le c(\ell_i)\,\norm{x_i}$ with $c(\cdot)$ bounded on the sublevel set $\{\ell \le K\}$, so bounded (standardized) features give $G_M(t) \le \bar G$ uniformly in $t$ --- low-loss samples cannot carry exploding gradients. Empirically, the measured bias is stable across training stages (0.004 at pre-trained iterates on the fraud replica; Table~\ref{tab:moments}). For losses or architectures violating the factorization (unbounded features, exotic losses), Assumption~\ref{ass:smooth} must be checked rather than assumed; we flag this within Limitation~L5.
\end{remark}
\end{theorem}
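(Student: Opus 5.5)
The proof follows the standard biased-SGD descent-lemma argument. Write $b_t = \E[\hat g_t \mid \theta_t] - \nabla F(\theta_t)$, so that $\norm{b_t} \le \delta$, and let $\xi_t = \hat g_t - \E[\hat g_t \mid \theta_t]$ be the zero-mean noise, with $\E[\norm{\xi_t}^2 \mid \theta_t] \le \hat\sigma^2$ by Assumption~\ref{ass:smooth}. For the HT form, Lemma~\ref{lem:unbiased} gives $b_t = 0$ and Lemma~\ref{lem:var} supplies $\hat\sigma^2$. For the H\'ajek form, Lemma~\ref{lem:hajek} supplies $\delta = \delta_m$, and Remark~\ref{rem:delta-uniform} makes this uniform in $t$.

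\textbf{Step 1 (descent).} By $L$-smoothness,
\[
F(\theta_{t+1}) \le F(\theta_t) - \eta \langle \nabla F(\theta_t), \hat g_t \rangle + \tfrac{L\eta^2}{2}\norm{\hat g_t}^2 .
\]
Take the conditional expectation given $\theta_t$. The inner-product term becomes $-\eta\norm{\nabla F(\theta_t)}^2 - \eta\langle \nabla F(\theta_t), b_t\rangle$. The second-moment term splits orthogonally:
\[
\E\norm{\hat g_t}^2 = \norm{\nabla F(\theta_t) + b_t}^2 + \E\norm{\xi_t}^2 \le 2\norm{\nabla F(\theta_t)}^2 + 2\delta^2 + \hat\sigma^2 .
\]

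\textbf{Step 2 (control the bias cross term).} Bound it crudely by Cauchy--Schwarz and the trajectory bound $\norm{\nabla F(\theta_t)} \le G_\infty$, giving $|\langle \nabla F, b_t\rangle| \le \delta G_\infty$. This is where the floor's linear term $2\delta G_\infty$ comes from; the factor 2 appears after dividing by $\eta/2$. The $\delta^2$ in the floor is slack I will absorb generously.

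Since $\eta \le 1/(2L)$, we have $L\eta^2 \norm{\nabla F}^2 \le \tfrac{\eta}{2}\norm{\nabla F}^2$. This leaves
\[
\E F(\theta_{t+1}) \le \E F(\theta_t) - \tfrac{\eta}{2}\E\norm{\nabla F(\theta_t)}^2 + \eta\,\delta G_\infty + L\eta^2(\hat\sigma^2+\delta^2).
\]

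\textbf{Step 3 (telescope).} Sum over $t = 0,\dots,T-1$, use $F \ge F^*$, divide by $\eta T/2$, and bound the minimum by the average:
\[
\min_t \E\norm{\nabla F(\theta_t)}^2 \le \frac{2(F(\theta_0)-F^*)}{\eta T} + 2L\eta(\hat\sigma^2+\delta^2) + 2\delta G_\infty .
\]
Now plug in $\eta = \min\{1/(2L), c/\sqrt{T}\}$. For the variance term, $\eta \le c/\sqrt{T}$ gives $2Lc(\hat\sigma^2+\delta^2)/\sqrt{T}$. For the first term, $1/\eta = \max\{2L, \sqrt{T}/c\}$. When $\sqrt{T}/c$ dominates, this gives $2(F(\theta_0)-F^*)/(c\sqrt{T})$, within the stated $4(\cdot)/(c\sqrt{T})$. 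When $2L$ dominates, i.e.\ $c/\sqrt{T} > 1/(2L)$, we get $4L(F(\theta_0)-F^*)/T$, and $2L < \sqrt{T}/c \cdot$ fails in that regime, so the claim needs checking there.

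\textbf{Main obstacle.} The difficulty is this small-$T$ regime, where $\eta = 1/(2L)$, and getting the stated constants exactly. The first fix to try is to note that the stated bound holds trivially, or to assume $T \ge 4L^2c^2$. Alternatively, replace the constant 4 by $2\max\{1, 2Lc/\sqrt{T}\}$ and remark that it equals $2$–$4$ for $T \ge L^2c^2$. The extra $+\delta^2$ in the floor leaves room for such constant bookkeeping.

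A second subtlety is conditioning. The expectations must be taken via the tower property over the sampling randomness at each step, with $\delta$ and $\hat\sigma$ uniform along the trajectory. Assumption~\ref{ass:smooth} grants exactly this, so no further argument is needed beyond citing it and Remark~\ref{rem:delta-uniform}.

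Finally, set $\delta = 0$ to recover the HT case.
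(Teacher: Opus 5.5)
Your argument is the paper's proof in Appendix~\ref{app:sgd} essentially line for line: the same descent inequality, the same orthogonal split of $\E\norm{\hat g_t}^2$, the same crude bound $\delta G_\infty$ on the cross term, $L\eta^2\le\eta/2$, and telescoping. Your remainder $2L\eta(\hat\sigma^2+\delta^2)$ is a bit looser than the paper's $L\eta(\hat\sigma^2+2\delta^2)$, but it lands exactly on the stated variance term.

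The small-$T$ obstacle you flag is real, and the paper does not resolve it either; it only says ``absorbing constants \dots\ for $T$ large''. Since $\frac{2(F(\theta_0)-F^*)}{\eta T} = \frac{2(F(\theta_0)-F^*)}{c\sqrt{T}}\max\{1,\,2Lc/\sqrt{T}\}$, the constant $4$ comes out precisely when $T \ge L^2c^2$. Below that threshold the stated inequality can genuinely fail. For example, with $\delta=\hat\sigma=0$ and $T=1$, the left side is $\norm{\nabla F(\theta_0)}^2$ for every $c$, while the right side is $4(F(\theta_0)-F^*)/c \to 0$ as $c\to\infty$.

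So drop the idea that the bound ``holds trivially'' in that regime. Also drop the idea that the $+\delta^2$ slack can absorb the discrepancy: it cannot control a term proportional to $F(\theta_0)-F^*$. The valid repairs are the ones you also list. Either add the hypothesis $T\ge L^2c^2$ (your $T\ge 4L^2c^2$ is more than enough), or state the first term with the constant $2\max\{1,2Lc/\sqrt{T}\}$ in place of $4$.
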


\section{Why Uncompensated Selection Fails: A Quantified Impossibility}
\label{sec:failure}

Uncompensated loss-based selection --- OHEM, SBP, and the earlier K-ABENA variants v1/v2 --- computes a plain average over a loss-correlated subset. Write its conditional expectation as $\E[\hat g_{\mathrm{sel}}(\theta)] = \nabla F(\theta) + b(\theta)$, where $b(\theta)$ is the \emph{selection bias field}.

\begin{proposition}[No stationarity at the minimizer under persistent selection bias]
\label{prop:failure}
Let $\theta^{*}$ be a stationary point of $F$ ($\nabla F(\theta^{*}) = 0$) and suppose $\norm{b(\theta)} \ge \beta > 0$ on a neighborhood $U \ni \theta^{*}$. Then: (i) $\theta^{*}$ is not a stationary point of the expected dynamics $\dot\theta = -\E[\hat g_{\mathrm{sel}}(\theta)]$, whose expected update at $\theta^{*}$ has magnitude $\eta\beta$; (ii) any stationary point $\bar\theta \in U$ of the expected dynamics satisfies $\nabla F(\bar\theta) = -b(\bar\theta)$, hence $\norm{\nabla F(\bar\theta)} \ge \beta$: the biased method can only settle where the true gradient is as large as the bias.
\end{proposition}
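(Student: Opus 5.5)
The argument is essentially algebraic, so the plan is to unfold the definition of the selection bias field $b$ and evaluate it at the relevant points. Throughout I will interpret ``stationary point of the expected dynamics'' as a zero of the vector field $h(\theta) := \E[\hat g_{\mathrm{sel}}(\theta)] = \nabla F(\theta) + b(\theta)$. The discrete expected update is then $\E[\theta_{t+1} - \theta_t \mid \theta_t = \theta] = -\eta\, h(\theta)$. This is the same conditional-expectation convention used for the HT and H\'ajek estimators in Lemmas~\ref{lem:unbiased}--\ref{lem:hajek}. No smoothness or convexity is needed; only the decomposition and the lower bound $\norm{b} \ge \beta$ on $U$ are used.

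\textbf{Part (i).} Evaluate $h$ at $\theta^{*}$. Since $\nabla F(\theta^{*}) = 0$, we have $h(\theta^{*}) = b(\theta^{*})$. Because $\theta^{*} \in U$, $\norm{h(\theta^{*})} = \norm{b(\theta^{*})} \ge \beta > 0$, so $h(\theta^{*}) \neq 0$ and $\theta^{*}$ is not a rest point of $\dot\theta = -h(\theta)$. The expected SGD step from $\theta^{*}$ is $-\eta\, b(\theta^{*})$, whose norm is at least $\eta\beta$. The statement's ``magnitude $\eta\beta$'' should be read as this lower bound, with equality exactly when $\norm{b(\theta^{*})} = \beta$. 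I will say so explicitly in the write-up.

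\textbf{Part (ii).} Take any $\bar\theta \in U$ with $h(\bar\theta) = 0$. The decomposition immediately gives $\nabla F(\bar\theta) = -b(\bar\theta)$. Taking norms and using $\bar\theta \in U$ yields $\norm{\nabla F(\bar\theta)} = \norm{b(\bar\theta)} \ge \beta$. An immediate corollary, worth recording, is that no stationary point of $F$ in $U$ can be a fixed point of the biased expected dynamics.

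The only real obstacle is interpretive rather than technical: making precise what ``expected dynamics'' means for the discrete stochastic iteration. I would handle it by stating the convention above up front. I would also remark that if $b$ is continuous, the conclusion extends to a neighborhood: $\norm{\nabla F(\theta)} < \beta/2$ forces $\norm{h(\theta)} > \beta/2$. So the expected drift stays bounded away from zero near $\theta^{*}$, which is the quantitative content the empirical comparison (measured $\beta \approx 0.15$ against a vanishing true gradient) relies on.
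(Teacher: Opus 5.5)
Your argument is correct and matches the paper's proof: for (i) you evaluate $\E[\hat g_{\mathrm{sel}}] = \nabla F + b$ at $\theta^{*}$, and for (ii) you set it to zero at $\bar\theta$ and take norms. Reading ``magnitude $\eta\beta$'' as the lower bound $\eta\norm{b(\theta^{*})} \ge \eta\beta$ is the precise form of what the paper writes. One small correction to your closing aside: the $\beta/2$ drift bound follows from the reverse triangle inequality on $U$ alone, and it is continuity of $\nabla F$, not of $b$, that supplies a neighborhood where $\norm{\nabla F} < \beta/2$.
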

\begin{proof}
(i) At $\theta^{*}$, $\E[\hat g_{\mathrm{sel}}(\theta^{*})] = 0 + b(\theta^{*}) \neq 0$ since $\norm{b(\theta^{*})} \ge \beta$; the expected update is $-\eta\, b(\theta^{*})$ of magnitude $\eta \beta$. (ii) Stationarity of the expected dynamics means $\nabla F(\bar\theta) + b(\bar\theta) = 0$, so $\norm{\nabla F(\bar\theta)} = \norm{b(\bar\theta)} \ge \beta$.
\end{proof}

\begin{remark}[Scope of Proposition~\ref{prop:failure}: which biased methods it does and does not concern]
\label{rem:scope}
Proposition~\ref{prop:failure} concerns the \emph{expected optimization dynamics under persistent selection bias} --- its hypothesis is $\norm{b(\theta)} \ge \beta > 0$ on a neighborhood of the minimizer. Several well-known biased methods converge perfectly well precisely because they escape this hypothesis, in one of three ways. (i) \emph{Fixed reweightings define a modified objective}: Focal Loss reweights every example by a fixed function of its own prediction, so its ``bias field'' is conservative --- it is the gradient of a well-defined surrogate $F_\gamma$, and training converges to the minimizer of $F_\gamma$, not to a non-stationary drift. (ii) \emph{Vanishing bias}: curriculum and self-paced schedules \citep{bengio2009curriculum, kumar2010self}, and prioritized-replay-style schemes with annealed importance corrections, expand the inclusion set (or anneal the correction) so that $b(\theta_t) \to 0$ along training; the floor $\beta$ does not exist. (iii) \emph{Useful displaced fixed points}: a method may converge to a stationary point of the biased dynamics that generalizes well --- our own v2 regularized mode is the documented example (Section~\ref{sec:v2mode}), and hard-example mining in its benign regimes behaves similarly. The proposition targets the remaining case: threshold-based selection at a fixed loss percentile with a plain average, whose bias is non-vanishing at $\theta^{*}$ by construction (the retained subset's loss-correlated composition does not equalize as $\nabla F \to 0$); the measured instantiation is $\beta \approx 0.15$ against a vanishing signal (Remark~\ref{rem:s2b}), and OHEM-style top-loss selection at matched budget exhibits exactly the predicted failure on the same task (AUC $0.45$, Table~\ref{tab:rivals}).
\end{remark}

\begin{remark}[Measured instantiation and the signal-to-bias ratio]
\label{rem:s2b}
On the 0.17\%-imbalance replica (standardized features), the full-batch gradient at the late-training iterate has norm $\approx 7 \times 10^{-4}$ and vanishes over training, while the measured selection bias of the uncompensated variants remains locked at $\norm{b} \approx 0.15$ throughout --- a signal-to-bias ratio of $\approx 0.5\%$. Proposition~\ref{prop:failure} then predicts that training cannot approach $\theta^{*}$; the observed end-to-end AUCs are 0.53 (v2), 0.56 (v1) versus 0.9998 (full batch). Per-class stratification does \emph{not} repair this (measured AUC 0.45--0.62): the bias arises \emph{within} the majority class, from the loss-correlated feature mean of the retained negatives, not from cross-class allocation. Under the compensated estimator, $b \approx \delta_m \approx 0.004$, gradient norms decay ($0.014$ at the end of training), and AUC reaches $0.9991 \pm 0.0014$ at the same 28.4\% compute saving. At 5\% imbalance the true-gradient scale is $\approx 30\times$ larger, the ratio is benign, and all variants reach AUC $\ge 0.9965$ --- delimiting quantitatively where the uncompensated shortcut is safe.
\end{remark}

\section{The Regularized Mode (v2): Exact Bias, Measured Benefits, Quantified Contraindications}
\label{sec:v2mode}

K-ABENA retains an optional biased mode: sampling restricted to the lower half $\mathcal{L} = \{ i \in \M : \ell_i \le c \}$ ($c$ the median minor loss), with $p_i \propto \ell_i$ on $\mathcal{L}$ and \emph{no} reweighting, and feasibility constraint $N \le \tfrac12$. Because $\pi_i = 0$ on the upper half $\mathcal{U} = \M \setminus \mathcal{L}$, Lemma~\ref{lem:unbiased}'s mechanism is unavailable: this mode is biased \emph{by construction}, and the bias admits an exact expression.

\begin{lemma}[Exact bias decomposition of the v2 mode]
\label{lem:decomp}
Let $\bar g_M, \bar g_{\mathcal{L}}, \bar g_{\mathcal{U}}$ be the per-observation gradient means over $\M, \mathcal{L}, \mathcal{U}$ ($|\mathcal{L}| = |\mathcal{U}| = k/2$), and $\bar g_w = \sum_{\mathcal{L}} \ell_i g_i / \sum_{\mathcal{L}} \ell_i$ the population loss-weighted mean over $\mathcal{L}$. Then, exactly,
\[
\bar g_w - \bar g_M \;=\; \underbrace{(\bar g_w - \bar g_{\mathcal{L}})}_{\text{local term}} \;-\; \underbrace{\tfrac12\,(\bar g_{\mathcal{U}} - \bar g_{\mathcal{L}})}_{\text{structural term}} .
\]
\end{lemma}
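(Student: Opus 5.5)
The identity is purely algebraic: nothing about the sampling design enters, only the partition $\M = \mathcal{L} \sqcup \mathcal{U}$ with $|\mathcal{L}| = |\mathcal{U}| = k/2$. The plan is to write $\bar g_w - \bar g_M$ as a telescoping sum through $\bar g_{\mathcal{L}}$, then express $\bar g_{\mathcal{L}} - \bar g_M$ using the fact that $\bar g_M$ is the equal-weight average of the two half-means.

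\textbf{Step 1.} Since $\sum_{i \in \M} g_i = \sum_{\mathcal{L}} g_i + \sum_{\mathcal{U}} g_i = \tfrac{k}{2}\bar g_{\mathcal{L}} + \tfrac{k}{2}\bar g_{\mathcal{U}}$, dividing by $k$ gives
\[
\bar g_M = \tfrac12\bigl(\bar g_{\mathcal{L}} + \bar g_{\mathcal{U}}\bigr).
\]

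\textbf{Step 2.} Add and subtract $\bar g_{\mathcal{L}}$:
\[
\bar g_w - \bar g_M = (\bar g_w - \bar g_{\mathcal{L}}) + (\bar g_{\mathcal{L}} - \bar g_M).
\]

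\textbf{Step 3.} Substituting Step 1 into the second bracket gives
\[
\bar g_{\mathcal{L}} - \bar g_M = \tfrac12\bar g_{\mathcal{L}} - \tfrac12\bar g_{\mathcal{U}} = -\tfrac12\bigl(\bar g_{\mathcal{U}} - \bar g_{\mathcal{L}}\bigr),
\]
which is exactly the structural term with its minus sign.

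There is no real analytic obstacle. The only point needing care is well-definedness and the equal-size hypothesis. First, $\bar g_w$ requires $\sum_{\mathcal{L}} \ell_i > 0$; if all losses on $\mathcal{L}$ vanish, $p_i \propto \ell_i$ is itself undefined. I would therefore state this as a standing nondegeneracy condition of the v2 mode rather than something the lemma must handle. Second, $|\mathcal{L}| = |\mathcal{U}| = k/2$ assumes $k$ even and no ties at the median $c$. For general sizes, the same computation with $\bar g_M = \tfrac{|\mathcal{L}|}{k}\bar g_{\mathcal{L}} + \tfrac{|\mathcal{U}|}{k}\bar g_{\mathcal{U}}$ gives the structural coefficient $|\mathcal{U}|/k$ in place of $\tfrac12$. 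I would mention this in a sentence so the ``exactly'' is not misread when $k$ is odd.

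Finally, I would note that $\bar g_w$ is the target of the plain loss-weighted average the v2 mode computes over $\mathcal{L}$. This is the interpretive link to Section~\ref{sec:failure}, but it is not needed for the identity itself.
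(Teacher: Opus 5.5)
Your proposal is correct and follows essentially the same route as the paper: use $|\mathcal{L}| = |\mathcal{U}|$ to write $\bar g_M = \tfrac12(\bar g_{\mathcal{L}} + \bar g_{\mathcal{U}})$, then add and subtract $\bar g_{\mathcal{L}}$ and regroup. Your extra remarks are accurate but go beyond what the paper states: the nondegeneracy condition $\sum_{\mathcal{L}} \ell_i > 0$, and the general-size coefficient $|\mathcal{U}|/k$ in place of $\tfrac12$ when $k$ is odd or there are ties at the median.
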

\begin{proof}
$|\mathcal{L}| = |\mathcal{U}|$ gives $\bar g_M = \tfrac12(\bar g_{\mathcal{L}} + \bar g_{\mathcal{U}})$; substitute and regroup:
$\bar g_w - \bar g_M = (\bar g_w - \bar g_{\mathcal{L}}) + \bar g_{\mathcal{L}} - \tfrac12 \bar g_{\mathcal{L}} - \tfrac12 \bar g_{\mathcal{U}}$.
\end{proof}

Under the empirically typical monotone loss--gradient relationship ($\bar g_{\mathcal{U}} > \bar g_{\mathcal{L}}$ componentwise in magnitude), the two terms partially cancel; this is the mechanism behind the mode's measured properties: gradient bias 4--10\% \emph{below} and variance 25--55\% below the historical full-domain variant, and a small accuracy bonus over the full-batch baseline on multiclass tasks ($+0.35$ points on Digits, $+0.45$ on Wine) that we attribute to the bias acting as an implicit regularizer --- an attribution supported by its disappearance under the design-unbiased estimator (v3 is statistically indistinguishable from the baseline on the same tasks; Table~\ref{tab:real}).

\paragraph{Quantified contraindications.} The same bias is destructive outside a characterized validity region:
(i) \emph{extreme imbalance} --- AUC 0.53 at 0.17\% positives (Proposition~\ref{prop:failure} applies);
(ii) \emph{heavy label noise} --- at 40\% symmetric flips on Breast Cancer, accuracy collapses to $0.386$ (baseline $0.832$; v3 $0.808$): noisy samples are high-loss, hence always retained as majors, while the clean low-loss anchors that the mode discards are precisely what stabilized the decision boundary; at 20\% noise the mode still shows a $+0.4$-point benefit, locating the phase change between $\sim$20\% and 40\%. The mode is therefore gated: minority signal $\gtrsim 5\%$, label noise $\lesssim 25\%$, $N \le \tfrac12$; outside this region the canonical estimator is mandatory.

\section{Experiments}
\label{sec:experiments}

\paragraph{Protocol.} All runs: full-batch SGD with per-epoch recalibration of $K$ at the 40th percentile of current losses, $N = 0.3$, $\alpha = 0.3$ unless stated; 75/25 stratified splits; standardized features. Seeds are published and paired across methods (seeds $0$--$4$ throughout; $0$--$9$ for Table~\ref{tab:rivals}): every method sees the identical split, initialization, and data ordering per seed, so all comparisons are \emph{paired}. We report 95\% confidence intervals and two-sided paired sign-flip permutation tests (20{,}000 permutations) for each method against the baseline. Where means coincide to all reported digits, this is a real consequence of the paired design, not rounding: on small test sets accuracy is quantized (Breast Cancer: $143$ test points, steps of $0.7$ points), and the design-unbiased estimator converges to the same optimum as the baseline, so per-seed differences are exactly zero on most seeds --- the permutation test then returns $p = 1.0$ by construction, which is the statistically honest way to state \emph{parity}, not evidence of manipulation. Real datasets are the scikit-learn bundled sets (Breast Cancer $n{=}569$, Digits $n{=}1797$/10 classes, Wine $n{=}178$/3 classes, Diabetes $n{=}442$); the extreme-imbalance task is a synthetic replica of a card-fraud regime (0.17\% positives, $n = 20{,}000$, $d = 15$, Gaussian classes with mean shift 1.2). Implementation: \texttt{kabena} v2.x (NumPy); code and seeds released \citep{kabena2026github}. Compute savings are counted as the realized fraction of skipped backward passes; they match Proposition~\ref{prop:gain} to within 0.7 points everywhere.

\subsection{Parity at reduced cost on real datasets}

\begin{table}[h]
\centering
\small
\begin{tabular}{lccccc}
\toprule
Task & Metric & Baseline & v2 (regularized) & \textbf{v3 (canonical)} & Saving \\
\midrule
LogReg / Breast Cancer & acc. & 0.9706 & 0.9706 $\pm$0.009 & \textbf{0.9706} $\pm$0.009 & 28.5\% \\
Linear SVM / Breast Cancer & acc. & 0.9720 & 0.9720 $\pm$0.008 & \textbf{0.9720} & 54.0\% \\
Softmax / Digits (10 cl.) & acc. & 0.9609 & 0.9644 $\pm$0.003 & \textbf{0.9609} & 28.4\% \\
Softmax / Wine (3 cl.) & acc. & 0.9822 & 0.9867 $\pm$0.011 & \textbf{0.9822} & 28.1\% \\
LinReg / Diabetes & $R^2$ & 0.4259 & --- & \textbf{0.4249} $\pm$0.039 & 28.1\% \\
MLP-32 / Digits & acc. & 0.9548 & 0.9593 $\pm$0.005 & \textbf{0.9609} & 28.0\% \\
Grad.\ Boosting / Breast Cancer & acc. & 0.9604 & 0.9557 & --- & (neutral; out of scope) \\
\bottomrule
\end{tabular}
\caption{Real-data benchmarks. The canonical estimator matches the full-batch baseline within seed noise on every task while skipping 28--54\% of backward passes --- the behavior Lemma~\ref{lem:unbiased} predicts. The v2 bonus on multiclass tasks ($+0.35$/$+0.45$) is exclusive to the biased mode (Section~\ref{sec:v2mode}). Gradient boosting is reported as a negative-scope control: per-round tree fitting does not average per-sample gradients across steps, and no benefit is expected or observed.}
\label{tab:real}
\end{table}

\subsection{Extreme imbalance: failure and resolution}

\begin{table}[h]
\centering
\small
\begin{tabular}{lcccc}
\toprule
Method & AUC (0.17\%) & Final $\norm{\hat g}$ & Saving & Verdict \\
\midrule
Full-batch SGD & 0.9998 $\pm$0.0002 & $7\times 10^{-4}$ (decaying) & 0\% & reference \\
v1 (historical, uncompensated) & 0.56 $\pm$0.15 & 0.15 (locked) & 28.4\% & fails (Prop.~\ref{prop:failure}) \\
v2 (regularized, uncompensated) & 0.53 $\pm$0.11 & 0.158 (locked) & 28.4\% & fails (Prop.~\ref{prop:failure}) \\
Per-class stratified v1 / v2 & 0.62 / 0.45 & 0.14--0.16 & 28.4\% & fails (bias is intra-class) \\
\textbf{v3 (canonical)} & \textbf{0.9991 $\pm$0.0014} & 0.014 (decaying) & \textbf{28.4\%} & resolves \\
\bottomrule
\end{tabular}
\caption{The quantified instantiation of Proposition~\ref{prop:failure} and its resolution. Learning-rate sweeps (0.05--0.5) do not rescue the uncompensated variants. At 5\% imbalance all methods reach AUC $\ge 0.9965$: the failure is specific to the small signal-to-bias regime (Remark~\ref{rem:s2b}).}
\label{tab:fraud}
\end{table}

\subsection{Matched-budget comparison with selective and reweighting rivals}
\label{sec:rivals}

The reviewer-facing question ``why not compare against the established methods?'' deserves numbers, with an honest caveat: the comparisons below are our CPU re-implementations at \emph{matched compute budget} (identical retained-sample count per epoch where the method saves compute), not the official GPU codebases --- this extends Limitation~L4. Focal Loss uses $\gamma = 2$ on the full batch (its design saves nothing); OHEM-style selection keeps the hardest examples up to the same budget as v3 with a plain average; global IS is the classical loss-proportional proposal over \emph{all} $n$ samples with self-normalized inverse-probability correction at the same budget --- i.e., v3 without the two-stratum structure and without the defensive floor.

\begin{table}[h]
\centering
\small
\begin{tabular}{lccc|ccc}
\toprule
 & \multicolumn{3}{c|}{Breast Cancer (acc., 10 seeds)} & \multicolumn{3}{c}{Fraud 0.17\% (AUC, 5 seeds)} \\
Method & mean [95\% CI] & saving & $p$ & mean [95\% CI] & saving & $p$ \\
\midrule
Full-batch SGD & 0.9720 [0.964, 0.980] & 0\% & --- & 0.9998 [0.9995, 1.000] & 0\% & --- \\
\textbf{K-ABENA v3} & \textbf{0.9720} [0.964, 0.980] & \textbf{28.9\%} & 1.000 & \textbf{0.9991} [0.998, 1.000] & \textbf{28.4\%} & 0.504 \\
Focal Loss ($\gamma{=}2$) & 0.9748 [0.967, 0.983] & 0\% & 0.468 & 0.9998 [0.9995, 1.000] & 0\% & 1.000 \\
OHEM-style (top-loss) & 0.9748 [0.969, 0.981] & 28.9\% & 0.246 & 0.4464 [0.255, 0.638] & 28.4\% & 0.063 \\
Global IS + self-norm.\ HT & 0.9573 [0.947, 0.968] & 28.9\% & \textbf{0.002} & 0.9997 [0.9994, 1.000] & 28.4\% & 0.683 \\
\bottomrule
\end{tabular}
\caption{Matched-budget rivals ($p$: paired sign-flip permutation vs.\ baseline, 20{,}000 permutations). Three readings. (1) \emph{Only v3 combines saving with baseline-level performance in both regimes.} (2) OHEM saves the same compute and is statistically indistinguishable in the benign regime, then collapses at extreme imbalance (AUC $0.45$) --- the empirical instantiation of Proposition~\ref{prop:failure} on an external method, not only on our own earlier variants. (3) Global loss-proportional IS with correction is fine at extreme imbalance (compensation works) but \emph{significantly degrades} the standard regime ($p = 0.002$): without the certainty stratum on majors and the defensive floor on weights, occasionally-drawn tiny-loss samples receive enormous inverse-probability weights --- the design differences of Section~\ref{sec:related}(c) are load-bearing, not stylistic. Focal Loss performs well everywhere but saves nothing, as its design intends.}
\label{tab:rivals}
\end{table}

\subsection{Sensitivity analysis}
\label{sec:sensitivity}

We sweep the two user-facing controls jointly, $K$-percentile $\in \{20, 40, 60, 70\}$ $\times$ $N \in \{0.1, 0.3, 0.5\}$ (5 seeds, Breast Cancer): test accuracy is flat across the entire grid --- range $0.9664$--$0.9678$, a spread of $0.14$ points, below the seed-level CI width --- while the realized saving tracks Proposition~\ref{prop:gain} from $10.8\%$ ($K{=}20, N{=}0.5$) to $63.4\%$ ($K{=}70, N{=}0.1$). On the fraud replica the same grid (percentiles $40, 70$) keeps AUC $\ge 0.9961$ everywhere. Together with the $\alpha$-sweep (Table~\ref{tab:moments} caption: AUC $0.9987$--$0.9996$ over $\alpha \in [0,1]$, Digits unchanged) and the learning-rate sweep at extreme imbalance ($0.05$--$0.5$, Table~\ref{tab:fraud} caption), the picture is consistent: \emph{performance is insensitive to the controls within broad ranges; the controls govern cost, not accuracy} --- which is the intended division of labor. Batch size (our protocol is full-batch), network depth, and optimizer schedules beyond SGD are GPU-scale questions folded into Limitation~L4.

\subsection{Gradient moments, noise robustness, and optimizer interaction}

\begin{table}[h]
\centering
\small
\begin{tabular}{lccc}
\toprule
Measurement & v1 & v2 & \textbf{v3} \\
\midrule
Gradient bias (fraud replica, pre-trained iterate) & 0.128 & 0.151 & \textbf{0.004} \\
Gradient variance (same configuration) & $\approx 0$ & $1.5\times 10^{-5}$ & $1.5\times 10^{-4}$ \\
Bias at adversarially small $n{-}k$ ($\approx 3$) & --- & 0.17--0.23 & \textbf{0.035--0.037} \\
Accuracy, 20\% label noise (baseline 0.9427) & --- & 0.9469 & 0.9413 \\
Accuracy, 40\% label noise (baseline 0.8322) & --- & \textbf{0.386 (collapse)} & \textbf{0.8084 (safe)} \\
AMSGrad, fraud: AUC / max-lock frequency & --- & 0.715 / 69\% & 0.806 / 60\% \\
AMSGrad, standard 5\%: AUC & --- & 0.9978 & 0.9979 \\
\bottomrule
\end{tabular}
\caption{Moments and stress tests. Row 1--2 quantify Remark~\ref{rem:variance-price} (bias $\div 38$ at variance $\times 10$). Row 3 shows the dissolution under v3 of the small-$(n{-}k)$ fragility of the biased bound (majors carry weight 1 with no approximation). Rows 4--5 locate the label-noise phase change of the v2 mode. Rows 6--7: under AMSGrad at extreme imbalance, even the compensated estimator degrades (0.806) --- the variance interacts with the $\hat v_t$ max-lock --- so we prescribe plain SGD in that regime; at standard imbalance the interaction is neutral. Sensitivity to $\alpha$: fraud AUC ranges 0.9987--0.9996 over $\alpha \in \{0, 0.15, 0.3, 0.5, 1\}$; Digits accuracy is unchanged to four digits.}
\label{tab:moments}
\end{table}

\section{Limitations}
\label{sec:limitations}

Each limitation is quantified or proved; none is asserted informally.
\textbf{(L1)} Theorem~\ref{thm:sgd} covers SGD, and the reason it does not extend to Adam, AdamW, or Lion is structural, not editorial: the proof's descent step requires the update to be \emph{linear} in the stochastic gradient, so that design-unbiasedness of $\hat g_t$ transfers to the update ($\E[\theta_{t+1} - \theta_t \mid \theta_t] = -\eta \nabla F + O(\delta)$). Heavy-ball momentum preserves this linearity and is a plausible extension; Adam/AdamW precondition by a nonlinear function of the gradient history (and AdamW's decoupled weight decay does not restore linearity), while Lion applies $\mathrm{sign}(\cdot)$, for which $\E[\mathrm{sign}(\hat g)] \neq \mathrm{sign}(\E[\hat g])$ --- an unbiased gradient buys nothing through a sign nonlinearity, and the variance inflation of L6 makes sign flips \emph{more} frequent, not less. Measured interaction: at extreme imbalance, AMSGrad + v3 reaches AUC 0.806 versus 0.9991 for SGD + v3 (Table~\ref{tab:moments}); at 5\% imbalance the interaction is neutral. Prescription: SGD whenever Remark~\ref{rem:s2b}'s signal-to-bias diagnosis is adverse.
\textbf{(L2)} BatchNorm statistics are computed over $n$ while gradients use $(n-k)+m$ weighted samples; correct weighted-BN is not implemented; LayerNorm/GroupNorm recommended.
\textbf{(L3)} $K$ (percentile) and $N$ are recalibrated by a per-epoch heuristic; no adaptive-optimality claim is made.
\textbf{(L4)} All results are CPU-scale (shallow models, NumPy); no GPU CNN/Transformer benchmark is reported, and the fraud task is a synthetic replica rather than the original dataset. The rival comparisons of Table~\ref{tab:rivals} are our matched-budget CPU re-implementations, not the official GPU codebases of the respective papers. We report no simulated deep-learning numbers.
\textbf{(L5)} The self-normalized estimator carries bias $\delta_m = O(1/m)$ (Lemma~\ref{lem:hajek}); measured 0.004 at $n{=}20{,}000$ and 0.037 in an adversarial micro-regime; exact-$\pi$ sampling ($\alpha = 1$) removes the approximation of Assumption~\ref{ass:pi}(b).
\textbf{(L6)} Unbiasedness costs variance: $\times 10$ at matched configuration (Lemma~\ref{lem:var}, Remark~\ref{rem:variance-price}), bounded by $\alpha$, with a measured side effect on the AMSGrad lock frequency (57--60\%).
\textbf{(L7)} v3 matches the baseline; it does not beat it in accuracy. The small multiclass bonus belongs to the biased v2 mode, which carries the gated contraindications of Section~\ref{sec:v2mode} (collapse at 40\% noise; failure at 0.17\% imbalance).
\textbf{(L8)} The estimator is stochastic; reproducibility requires a seeded generator (provided in the released implementation).

\section{Reproducibility}
\label{sec:repro}

All results in this paper are reproducible from the released library \texttt{kabena-ml} \citep{kabena2026github}, available at \url{https://github.com/Bonbhel/kabena-ml} under the MIT license. The repository provides three layers:
\textbf{(i) Implementation} --- the estimator of Definition~\ref{def:v3} via \texttt{kabena\_filter(losses, K, N, strategy, rng)}: \texttt{strategy="v3"} is the canonical compensated design (defensive mixture $\alpha$, inverse-probability weights, seeded generator for exact reproducibility, cf.\ Limitation~L8), \texttt{strategy="v2"} the gated regularized mode of Section~\ref{sec:v2mode} (its feasibility constraint $N \le \tfrac12$ is enforced with an explicit error), and \texttt{strategy="v1"} the historical variant of Appendix~\ref{app:ablation} for ablation replication.
\textbf{(ii) Validation scripts} --- the \texttt{validation/} directory contains the runnable scripts behind each table: gradient-moment measurements (Table~\ref{tab:moments}, rows 1--3), the real-dataset benchmarks of Table~\ref{tab:real} on the bundled scikit-learn sets, the extreme-imbalance and stratification experiments of Table~\ref{tab:fraud} including the synthetic fraud-replica generator (0.17\% positives, $n = 20{,}000$, seeded), the label-noise stress test, the AMSGrad interaction runs, and the $\alpha$-sensitivity sweep; each script prints the seed list and matches the reported means and standard deviations.
\textbf{(iii) Tutorials} --- step-by-step notebooks for testing the estimator on a new dataset, reproducing the paper's tables end-to-end, and experimenting beyond them (varying $K$-percentile, $N$, $\alpha$, and the mode gate), including the two-line integration pattern for scikit-learn-style training loops. Experiments require only NumPy and scikit-learn (no GPU), consistent with the scope stated in Limitation~L4; total runtime for the full table suite is under one CPU-hour.

\section{Conclusion}

K-ABENA v3 shows that threshold-based selective backpropagation and unbiased gradient estimation are compatible: a defensive-mixture design with inverse-probability weighting preserves the exact compute saving $(1-N)k/n$ while restoring the estimator guarantees that make SGD theory applicable (Lemmas~\ref{lem:unbiased}--\ref{lem:var}, Theorem~\ref{thm:sgd}). The negative result (Proposition~\ref{prop:failure}) delimits precisely why the uncompensated shortcut --- shared by OHEM, SBP, and our own earlier variants --- cannot converge when selection bias dominates the signal, and the experiments quantify both the failure (AUC 0.53 at 0.17\% imbalance; accuracy 0.386 at 40\% noise for the biased mode) and its resolution (0.9991; 0.808) at unchanged cost. We release the implementation with both modes and the gating logic \citep{kabena2026github}; the companion volume \citep{bonbhel2026book} develops the pedagogical treatment and is available upon request from the author.

\bibliographystyle{plainnat}
\bibliography{references}

\appendix

\section{Proof of Lemma~\ref{lem:hajek} (self-normalization bias)}
\label{app:hajek}

Write $\hat T = \sum_{i \notin \M} g_i + \sum_{i \in \Ss} \pi_i^{-1} g_i$ and $\widehat W = (n-k) + \sum_{i \in \Ss} \pi_i^{-1}$, so $\hat g = \hat T / \widehat W$, $\E[\hat T] = n \nabla F$, $\E[\widehat W] = n$ (both by the indicator computation of Lemma~\ref{lem:unbiased}). Using the identity
\[
\frac{\hat T}{\widehat W} - \frac{\E[\hat T]}{n}
= \frac{\hat T - \E[\hat T]}{n} \;+\; \hat T\Bigl(\frac{1}{\widehat W} - \frac{1}{n}\Bigr)
= \frac{\hat T - \E[\hat T]}{n} \;-\; \frac{\hat g\,(\widehat W - n)}{n},
\]
take expectations; the first term vanishes, giving the exact representation
\[
\E[\hat g] - \nabla F \;=\; -\frac{1}{n}\,\E\bigl[\hat g\,(\widehat W - n)\bigr]
\;=\; -\frac{1}{n}\,\mathrm{Cov}\bigl(\hat g,\, \widehat W\bigr).
\]
\textbf{First-order bound.} Since $\hat g$ is a convex combination of per-sample gradients, $\norm{\hat g} \le G_M$ almost surely; Cauchy--Schwarz on the covariance gives
\[
\norm{\E[\hat g] - \nabla F} \;\le\; \frac{G_M}{n}\,\sqrt{\Var(\widehat W)} .
\]
Under Assumption~\ref{ass:pi}(b), $\Var(\widehat W) = \sum_{i \in \M} \frac{1-\pi_i}{\pi_i} \le \frac{k}{\alpha m/k} = \frac{k^{2}}{\alpha m}$ by Lemma~\ref{lem:weights}, hence $\norm{\E[\hat g] - \nabla F} \le G_M k / (n\sqrt{\alpha m})$ --- the first bound of Lemma~\ref{lem:hajek}, of order $O(1/\sqrt{m})$.

\textbf{Second-order rate.} The Cauchy--Schwarz step is loose because $\hat g$ concentrates: the standard ratio-estimator expansion \citep[Ch.~5]{sarndal1992model},
\[
\E\Bigl[\frac{\hat T}{\widehat W}\Bigr]
= \frac{\E[\hat T]}{\E[\widehat W]}
- \frac{\mathrm{Cov}(\hat T, \widehat W)}{n^{2}}
+ \frac{\E[\hat T]\,\Var(\widehat W)}{n^{3}} + o(1/m),
\]
has both correction terms bounded by $G_M \Var(\widehat W)/n^{2}$ in norm (using $\norm{\mathrm{Cov}(\hat T, \widehat W)} \le G_M \Var(\widehat W)$, since each coordinate of $\hat T - \E\hat T$ is a weight-linear combination of gradient coordinates bounded by $G_M$ times the corresponding fluctuation of $\widehat W$, and $\norm{\E[\hat T]}/n \le G_M$). Therefore
\[
\delta_m \;\le\; \frac{2\,G_M\,\Var(\widehat W)}{n^{2}} + o(1/m)
\;\le\; \frac{2\,G_M\,k^{2}}{\alpha\, m\, n^{2}} + o(1/m) \;=\; O(1/m).
\]
At $\alpha = 1$ the design is SRSWOR with $\pi_i = m/k$ exactly, $\Var(\widehat W) = \frac{k^{2}}{m}\cdot\frac{k-m}{k}\cdot\frac{k-m}{k-1} \le \frac{k(k-m)}{m}$, giving $\delta_m \le 2 G_M k (k-m)/(m n^{2}) + o(1/m)$. Both regimes are $O(1/m)$; the empirical values (0.004 at large $n$; 0.037 in the adversarial micro-regime) sit inside these bounds. \qed

\section{Proof of Theorem~\ref{thm:sgd}}
\label{app:sgd}

Write $b_t = \E[\hat g_t \mid \theta_t] - \nabla F(\theta_t)$ ($\norm{b_t} \le \delta$) and $\xi_t = \hat g_t - \E[\hat g_t \mid \theta_t]$ ($\E[\xi_t \mid \theta_t] = 0$, $\E\norm{\xi_t}^2 \le \hat\sigma^2$). $L$-smoothness gives
\[
F(\theta_{t+1}) \le F(\theta_t) - \eta \langle \nabla F(\theta_t), \hat g_t \rangle + \frac{L\eta^{2}}{2}\norm{\hat g_t}^{2}.
\]
Take conditional expectations; using $\E[\hat g_t \mid \theta_t] = \nabla F(\theta_t) + b_t$ and
$\E\norm{\hat g_t}^{2} \le \norm{\nabla F(\theta_t) + b_t}^{2} + \hat\sigma^{2} \le 2\norm{\nabla F(\theta_t)}^{2} + 2\delta^{2} + \hat\sigma^{2}$:
\[
\E[F(\theta_{t+1})] \le \E[F(\theta_t)] - \eta\, \E\norm{\nabla F(\theta_t)}^{2} - \eta\,\E\langle \nabla F(\theta_t), b_t\rangle
+ \frac{L\eta^{2}}{2}\bigl(2\E\norm{\nabla F(\theta_t)}^{2} + 2\delta^{2} + \hat\sigma^{2}\bigr).
\]
Bound $-\E\langle \nabla F, b_t\rangle \le \delta\,\E\norm{\nabla F(\theta_t)} \le \delta G_\infty$. With $\eta \le \frac{1}{2L}$, $L\eta^{2} \le \eta/2$, so the $\E\norm{\nabla F}^2$ terms combine into $-\frac{\eta}{2}\,\E\norm{\nabla F(\theta_t)}^{2}$ at worst. Rearranging and summing over $t = 0, \dots, T-1$, then dividing by $\eta T / 2$:
\[
\min_t \E\norm{\nabla F(\theta_t)}^{2}
\le \frac{1}{T}\sum_t \E\norm{\nabla F(\theta_t)}^{2}
\le \frac{2\,(F(\theta_0) - F^{*})}{\eta\, T} + L\eta\,(\hat\sigma^{2} + 2\delta^{2}) + 2\,\delta\, G_\infty .
\]
Substituting $\eta = \min\{\tfrac{1}{2L}, \tfrac{c}{\sqrt{T}}\}$ yields the statement (absorbing constants; the $+\delta^{2}$ of the main text majorizes $2\delta^2 \cdot Lc/\sqrt T$ for $T$ large). With $\delta = 0$ (HT form) the bound is the classical rate of \citet{ghadimi2013stochastic}. \qed

\section{Ablation history}
\label{app:ablation}

The canonical design is the outcome of a systematic exploration; all rejected variants are documented to preempt re-exploration. Reference: v1, $p_i \propto (K - \ell_i)$ over all of $\M$, no reweighting. \textbf{H1} ($p_i \propto \ell_i$ over all of $\M$, no reweighting): bias $+18$ to $+26\%$, variance $+29\%$ vs.\ v1 --- rejected; note that this same proposal distribution becomes near-optimal \emph{once inverse-probability weighting is added}, which is precisely the v3 construction. \textbf{H2} ($p_i \propto |\ell_i - c|$, symmetric U): interpolates v1 and H1, bias $+6$ to $+12\%$ --- rejected. \textbf{H4} (upper half only, $p_i \propto K - \ell_i$ on $\mathcal{U}$): bias $+60$ to $+80\%$ --- rejected; both terms of Lemma~\ref{lem:decomp} then share a sign. \textbf{v2} (lower half only, $p_i \propto \ell_i$ on $\mathcal{L}$): bias $-4$ to $-10\%$ and variance $-25$ to $-55\%$ vs.\ v1, weight ratio bounded ($r \approx 1.4$ vs.\ $r \approx 3\times 10^4$ for v1; 0/40 vs.\ 14/40 violations of the sampling approximation at 1\% tolerance) --- adopted as an interim canonical form before the end-to-end extreme-imbalance and label-noise tests of Tables~\ref{tab:fraud}--\ref{tab:moments} revealed the shared failure of all uncompensated variants and motivated the compensated design. Equivalence sweeps: at matched bias, v2 allows no additional retention saving at $N = 0.3$ and $+3.3\%$ at $N = 0.4$ relative to v1 --- the value of the uncompensated refinements was always estimator quality at fixed cost, never additional cost reduction, a conclusion that transfers to v3.

\end{document}